\documentclass[letterpaper, 10 pt, conference]{ieeeconf} 
\IEEEoverridecommandlockouts
\usepackage{amsmath,amssymb,amsfonts}
\usepackage{algorithmic}
\usepackage{graphicx}
\usepackage{textcomp}
\usepackage{xcolor}
\usepackage{comment}
\usepackage{url}
\usepackage{hyperref}
\usepackage{tikz}
\usepackage{booktabs}
\usepackage{mathtools}
\usepackage{cite}             
\usetikzlibrary{matrix}

\newtheorem{theorem}{Theorem}
\newtheorem{assumption}{Assumption}

\usepackage{etoolbox}
\makeatletter
\def\@IEEEtablestring{table}
\long\def\@makecaption#1#2{%
\ifx\@captype\@IEEEtablestring%
\noindent{\footnotesize #1.~~ #2\par}% Justified by default
\@IEEEtablecaptionsepspace%
\else
\@IEEEfigurecaptionsepspace%
\setbox\@tempboxa\hbox{\footnotesize #1.~~ #2}%
\ifdim \wd\@tempboxa >\hsize%
\setbox\@tempboxa\hbox{\footnotesize #1.~~ }%
\parbox[t]{\hsize}{\footnotesize \noindent\unhbox\@tempboxa#2}%
\else%
\ifcenterfigcaptions \hbox to\hsize{\footnotesize\hfil\box\@tempboxa\hfil}%
\else \hbox to\hsize{\footnotesize\box\@tempboxa\hfil}%
\fi\fi\fi}
\makeatother

\begin{document}
\title{\textbf{Environment-Aware Learning of Smooth GNSS \\ Covariance Dynamics for Autonomous Racing}\\
}

\author{Y. Deemo Chen$^{*,\dag}$, Arion Zimmermann$^{*,\dag}$, Thomas A. Berrueta$^{\dag}$, and Soon-Jo Chung$^{\dag}$%
\thanks{Accepted to IEEE ICRA 2026.}%
\thanks{$^{*}$Equal contribution.}%
\thanks{$^{\dag}$Department of Computing and Mathematical Sciences, California Institute of Technology, Pasadena, CA.}%
\thanks{Video: \url{https://youtu.be/fkOKFMRtwyk}}%
\thanks{This project was in part funded by Beyond Limits.}%
}

\IEEEaftertitletext{\vspace{-1\baselineskip}}  % Add before \maketitle
\maketitle

\begin{abstract}
Ensuring accurate and stable state estimation is a challenging task crucial to safety-critical domains such as high-speed autonomous racing, where measurement uncertainty must be both adaptive to the environment and temporally smooth for control. In this work, we develop a learning-based framework, LACE, capable of directly modeling the temporal dynamics of GNSS measurement covariance. We model the covariance evolution as an exponentially stable dynamical system where a deep neural network (DNN) learns to predict the system's process noise from environmental features through an attention mechanism. By using contraction-based stability and systematically imposing spectral constraints, we formally provide guarantees of exponential stability and smoothness for the resulting covariance dynamics. We validate our approach on an AV-24 autonomous racecar, demonstrating improved localization performance and smoother covariance estimates in challenging, GNSS-degraded environments. Our results highlight the promise of dynamically modeling the perceived uncertainty in state estimation problems that are tightly coupled with control sensitivity.
\end{abstract}

% \begin{IEEEkeywords}
% Covariance Estimation, Adaptive Filtering, Contraction Theory, Autonomous Racing.
% \end{IEEEkeywords}

\section{Introduction}
Accurate and reliable state estimation is a key component of safety-critical autonomous systems, where centimeter-level precision is often essential for safe navigation and operation~\cite{barfoot_state_2024}. Autonomous racing in competition circuits such as the Indy Autonomous Challenge (IAC) represents an extreme instance of this requirement, demanding not only high-accuracy localization but also exceptionally smooth state estimates to ensure stable vehicle control at speeds exceeding 150 mph~\cite{iac_racecar_nodate}. However, maintaining this performance is complicated by environmental factors that degrade sensor measurements. When a vehicle passes through Global Navigation Satellite System (GNSS)-degraded areas, such as under bridges or near buildings, multipath interference and poor satellite visibility can cause measurement noise to fluctuate dramatically, increasing from less than a meter to over 50 meters in less than a second.

In high-speed autonomous systems, standard estimation methods, such as the Extended Kalman Filter (EKF), face a dual challenge: the measurement noise covariance must be both adaptive with respect to the environment and temporally smooth and stable. The need for smoothness is driven by the downstream control system, where abrupt changes in perceived measurement uncertainty can destabilize the vehicle by triggering aggressive maneuvers. Simultaneously, the covariance must adapt to rapid changes in GNSS quality to prevent the filter from becoming dangerously over-confident in noisy data or overly cautious with clean data. Satisfying this dual requirement is particularly challenging for conventional covariance models. A constant covariance, while inherently smooth, is not adaptive and leads to poor performance at environmental extremes~\cite{akhlaghi_adaptive_2017}. Moreover, models based on receiver-provided metrics, such as Dilution of Precision (DOP), adapt to satellite geometry but can change erratically and, more importantly, cannot account for local effects like signal multipath~\cite{lohan_accuracy_2016}. As a result, standard approaches fail to consider the coupled dynamics between state estimation and control.

\begin{figure}[!t]
    \includegraphics[width=\columnwidth]{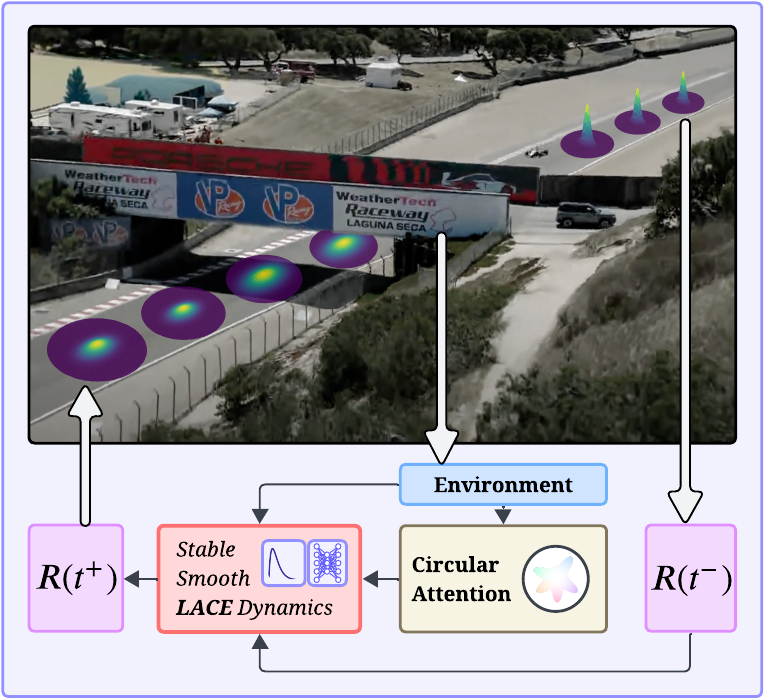}
    
    \vspace{0.5em} % space between the two figures
    \centering
    \includegraphics[width=0.995\columnwidth]{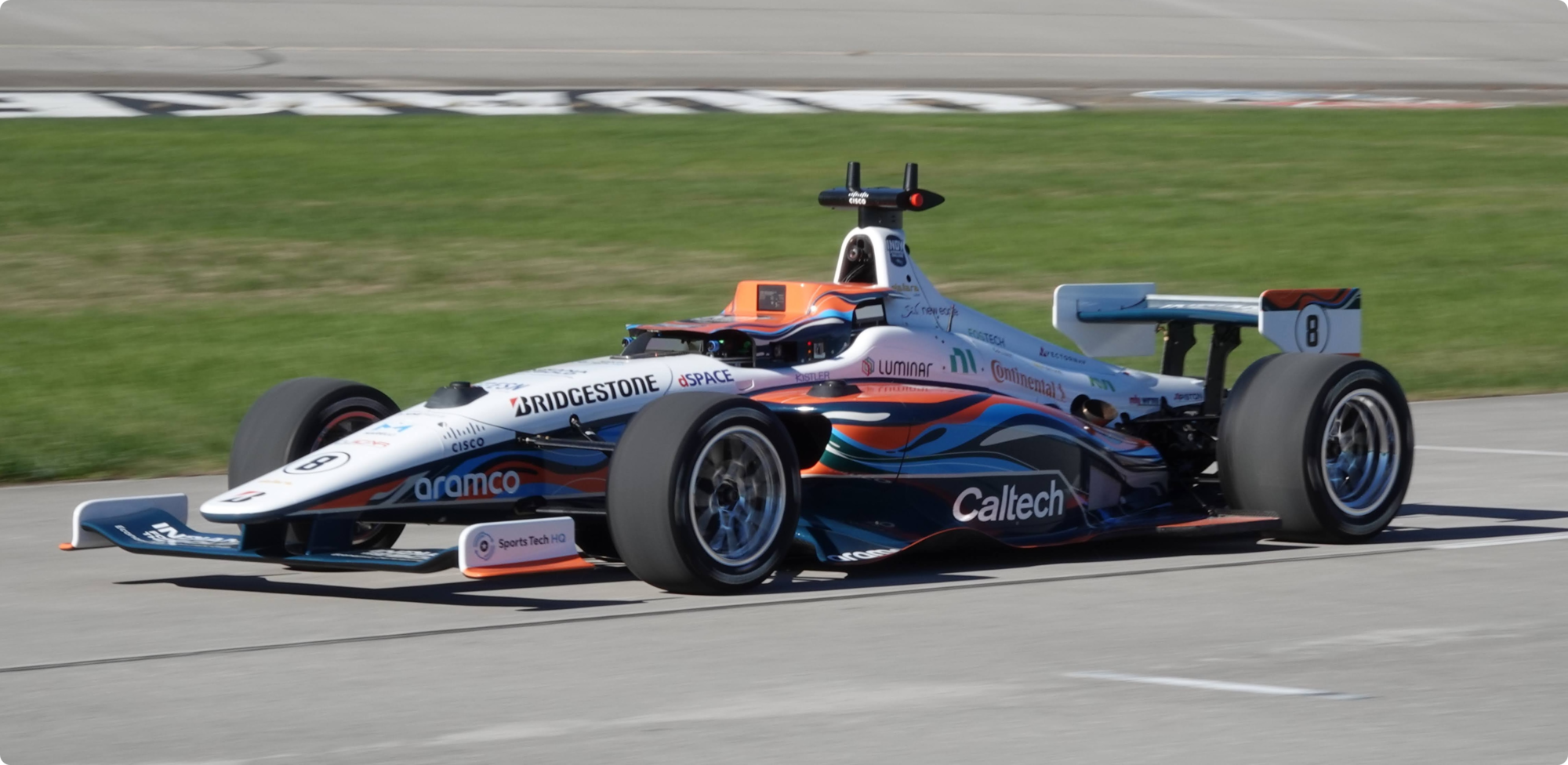}
    \caption{LACE applied to IAC racecars. \textbf{Top}: Illustration of the learned GNSS covariance evolution as the racecar passes under a concrete bridge at Laguna Seca Raceway. The structure degrades GNSS measurement quality, causing the covariance to broaden. For clarity, the covariance distribution is visualized in 2D. \textbf{Bottom}: Caltech IAC AV-24 racecar.}
    \label{fig:iac_cars}
\end{figure}

Recent methods address environmental adaptation and smoothness as isolated objectives without modeling covariance evolution as a dynamical system. For environmental awareness, DNNs have proven effective at learning an end-to-end mapping from raw sensor data to uncertainty, such as predicting instantaneous measurement covariance directly from sensor observables like images without needing hand-coded features~\cite{liu_deep_2018}. This has been extended to simultaneously estimate both process and measurement noise from temporal data streams using multitask learning architectures~\cite{wu_predicting_2021}. Other approaches use intermediate representations such as crowdsourced GNSS signal quality maps to predict measurement noise in urban areas~\cite{smolyakov_resilient_2020}. Some non-parametric framework predicts covariance by weighing empirical errors from nearby samples in a feature space~\cite{vega-brown_cello_2013}. This approach has since been applied to 3D point cloud registration~\cite{landry_cello-3d_2019} and has been extended with expectation-maximization to enable learning without ground truth state information~\cite{vega-brown_cello-em_2013}. While these methods capture complex sensor-to-uncertainty mappings, they output static covariance estimates without temporal dynamics. In other words, the learned models can produce discrete jumps in uncertainty that may destabilize downstream systems.

In contrast, techniques that prioritize temporal smoothness are often reactive and less capable of adapting to abrupt environmental shifts. Classical adaptive filtering, for example, adjusts noise parameters based on recent performance by analyzing its innovation sequence~\cite{mohamed_adaptive_1999}. Classical techniques rooted in innovation-based adaptive estimation operate on the principle that an optimal filter produces a white-noise innovation sequence and make use of this property to iteratively identify the process and measurement noise covariances~\cite{mehra_identification_1970}. Alternatively, methods based on multiple-model adaptive estimation run banks of parallel filters with different noise model assumptions and compute final state estimates as their weighted averages~\cite{magill_optimal_1965}. Smoothing techniques offer another path to stability by processing data in both forward and backward passes to produce optimal state trajectories, which is particularly effective during sensor outages~\cite{chiang_-line_2012}. However, these methods are either reactive or constrained to near-real-time post-processing, and cannot preemptively adapt to the rapid, environment-driven changes captured by predictive models.

To bridge the gap between purely predictive and reactive methods, some approaches explicitly model covariance as a regression problem. These methods construct a functional mapping from a feature space to a covariance matrix, using either parametric forms~\cite{hu_parametric_2015, hoff_covariance_2012} or non-parametric techniques like Gaussian Processes~\cite{kersting_most_2007}. While effective, these regression methods model the output of the uncertainty process but not the process itself. By framing each prediction independently, they neglect the inherent temporal correlation in how state uncertainty evolves. For example, sequential architectures may process temporal features~\cite{wu_predicting_2021}, but they still generate a series of static covariance estimates without explicitly modeling the transition between them.  This can lead to discontinuities that are problematic for control, a challenge that smoothing techniques address but only in post-processing~\cite{chiang_-line_2012}. In order to simultaneously achieve environmental adaptation and tunable smoothness there is a need for frameworks capable of learning the underlying covariance temporal dynamics.

In this work, we address this gap by introducing LACE (\emph{L}earning \emph{A}daptive \emph{C}ovariance \emph{E}volution), a learning-based framework that directly models the temporal dynamics of GNSS measurement covariance in an environmentally-aware fashion. We model the covariance evolution as an exponentially stable linearized time-varying dynamical system. On the one hand, a DNN learns to predict the system's process noise matrix, $Q(t)$, from environmental features. In particular, an attention mechanism produces spatial embeddings of the robot's state in order to capture sharp and highly localized environmental features, such as bridges or buildings. On the other hand, $A$ is constructed to be globally exponentially stable with spectral constraints, ensuring that the covariance evolution is both stable and smooth by design. Our contributions are then the following: 1) we propose a novel architecture that unifies environmental adaptation and tunable smoothness; 2) we prove the exponential stability and convergence of the covariance dynamics using contraction analysis~\cite{lohmiller1998contraction,tsukamoto_contraction_2021}; and 3) we validate our approach on an AV-24 autonomous racecar at the Laguna Seca Raceway, demonstrating improved performance in GNSS-degraded areas.

\section{Problem Definition}
 
Consider a robot with state $x(t) \in \mathbb{R}^m$, which includes the position vector $p(t) \in \mathbb{R}^3$. We define the GNSS measurement residual as $\epsilon(t) = p(t) - y(t)$, where $y(t)$ is the received GNSS position. Assuming a common coordinate frame, we model these residuals as realizations of a heteroscedastic Gaussian distribution, 
\begin{equation}
\label{eq:dist}
\epsilon(t) \sim \mathcal{N}(0, R(t)),
\end{equation} where the time-varying covariance matrix $R(t)$ is the quantity we aim to estimate from the robot's estimated state, $\hat{x}(t)$, and GNSS quality metrics, $g(t)$, such as DOP measures and satellite count.

% Our primary objective is to learn the covariance matrix $R(t)$ by minimizing the negative log-likelihood of the observed measurement residuals. This objective function is defined as:
% \begin{align}
%     J(\epsilon | P) = \frac{3}{2}\log 2\pi + \frac{1}{2}\log |P| + \frac{1}{2}\epsilon^{\!\top} P^{-1}\epsilon. \label{eq:nll_loss}
% \end{align}

% \begin{equation}
%     J(\epsilon \mid P) = \tfrac{3}{2}\log(2\pi) 
%     + \tfrac{1}{2}\log |R(t)| 
%     + \tfrac{1}{2}\,\epsilon(t)^{\!\top} R(t)^{-1}\epsilon(t).
%     \label{eq:nll_loss}
% \end{equation}
% The choice of this objective is motivated by the underlying state estimation framework.
While modeling the residuals with a heteroscedastic Gaussian may seem restrictive, it is a consistent assumption within the Multi-state Constraint Kalman Filter (MSCKF~\cite{mourikis_multi-state_2007, geneva_openvins_2020}), which makes Gaussian measurement noise assumptions.

In addition, temporal smoothness of the predicted covariance matrix is essential for stable state estimation and control. Rapid collapses of the covariance ellipsoid can cause abrupt changes in position estimates, leading to controller instability, particularly in high-performance autonomous racing. We quantify smoothness through the evolution of the log-determinant of $R(t)$ and impose the bound:

\begin{equation}
\label{eq:smoothness}
\frac{d}{dt} \log \det R(t) \geq -r_{\max},
\end{equation}
where $r_{\max} > 0$ is the maximum contraction rate of the ellipsoid. Even though this smoothness metric does not bound the contraction rate along all eigendirections, as the spectral norm would, it is a fast, practical and differentiable metric.

\section{Method}

As part of our method, we first infer at each timestep $t$ a symmetric positive-definite (SPD) matrix $Q$ from our core model using the robot's estimated state $\hat{x}(t)$ and the GNSS quality factors $g(t)$.
This matrix can then be utilized in two distinct ways:  
(i) directly as the desired GNSS measurement covariance, where smoothness in the dynamics is controlled by means of a loss term from Section~\ref{sec:loss}; or (ii) as a reference covariance within a Lyapunov dynamics formulation, as outlined in Section~\ref{sec:smoothness_lyap}, for which smoothness is guaranteed \textit{by design}. In this section, we discuss the model architecture of LACE, summarized in Fig.~\ref{fig:cov_arch}, the employed loss function that guarantees smoothness and finally prove the stability of our method.

\begin{figure}[!t]
    \centering
    \includegraphics[width=\columnwidth]{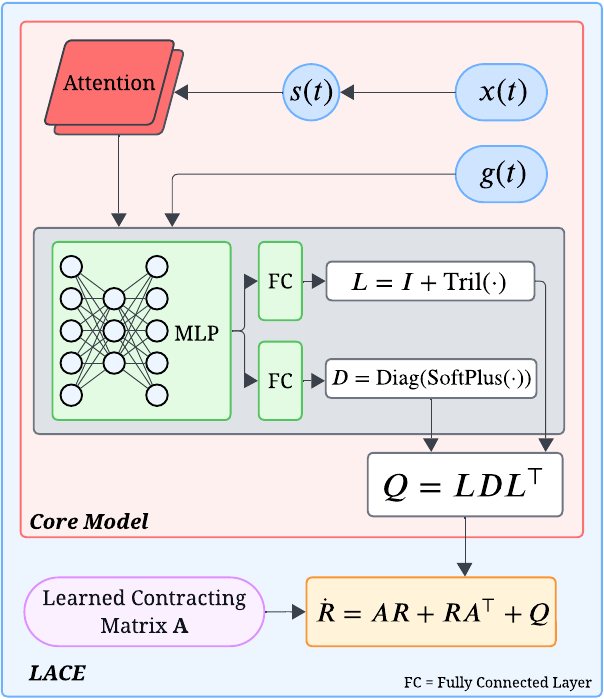}
    \caption{LACE architecture overview. The core model produces a symmetric positive-definite matrix, while during both training and inference, LACE propagates this output through its dynamical model to guarantee stability and enforce temporal smoothness.}
    \label{fig:cov_arch}
\end{figure}
A DNN is used to infer $Q$ from the estimated state of the robot $\hat{x}(t)$ and from the GNSS quality factors $g(t)$ that consist of various DOP metrics (GDOP, HDOP, VDOP, etc.) and the number of satellites used by the GNSS receiver.

\subsection{Preprocessing}

To reduce state dimensionality, $\hat{x}(t)$ is projected onto arc-length coordinates, namely the longitudinal progress $s(t)$ along the reference trajectory and its corresponding velocity $\dot{s}(t)$. This reduction is justified by the fact that the racecar repeatedly follows the same time-optimal trajectory and is later verified in Section~\ref{sec:results} by evaluating the model on multiple datasets following slightly different trajectories.

All inputs are normalized prior to being provided to the DNN. Specifically, DOP values are transformed using a logarithmic scale since they can vary over several orders of magnitude and often exhibit extreme values. The number of satellites is normalized per sequence. The longitudinal progress $s(t)$ is normalized to be between 0 and 1. Finally, the velocities $\dot{s}(t)$ are normalized, so that $\int_{0}^{T}\dot{s}(t)\,dt = T$, where $T$ is the sequence duration. This preprocessing facilitates stable optimization during training and improves the generalization performance of the network.

\subsection{Spatial Embedding and Attention Mechanism }

A global single-query attention mechanism \cite{vaswani_2017} is applied to the arc-length position $s(t)$ in order to encode spatial dependencies that may influence the output matrix. The inclusion of this mechanism is motivated by the physical environment of the racetrack, where surrounding structures such as bridges and buildings are present, and is aligned with previous work on attention-based map encoders that emphasize localized and sparse environmental features for control \cite{he2025}.

To this end, the query position is encoded into a unit-length complex query $Q = e^{i2\pi s(t)} \in \mathbb{C}$ and similarity with the key vector $K \in \mathbb{T}^a$ is given by:
\begin{align}
    \operatorname{Sim}(Q, K) := \Re(Q^*K),
\end{align}
which measures periodic alignment along the track with each key. Based on this similarity function, the attention output is computed as:
\begin{align}
    \tilde{s} \;=\;
    W \Big(
        V^\top\operatorname{Softmax}\!\left(
            \tfrac{1}{\tau}\,
            \Re(Q^* K)
        \right)
    \Big),
\end{align}
where $V \in \mathbb{R}^{a \times b}$ associates a value embedding for each key location, $\tau>0$ is the temperature parameter controlling the sharpness of the attention weights, and $W \in \mathbb{R}^{c \times b}$ is the learnable output projection. With a low temperature $\tau$, this formulation enables the attention mechanism to focus on highly localized spatial dependencies, as depicted in Figure~\ref{fig:spatial_embedding}. This attention mechanism is essential for learning the locations of sparse and localized environmental features.

\begin{figure}[!t]
    \centering
    \includegraphics[width=\columnwidth]{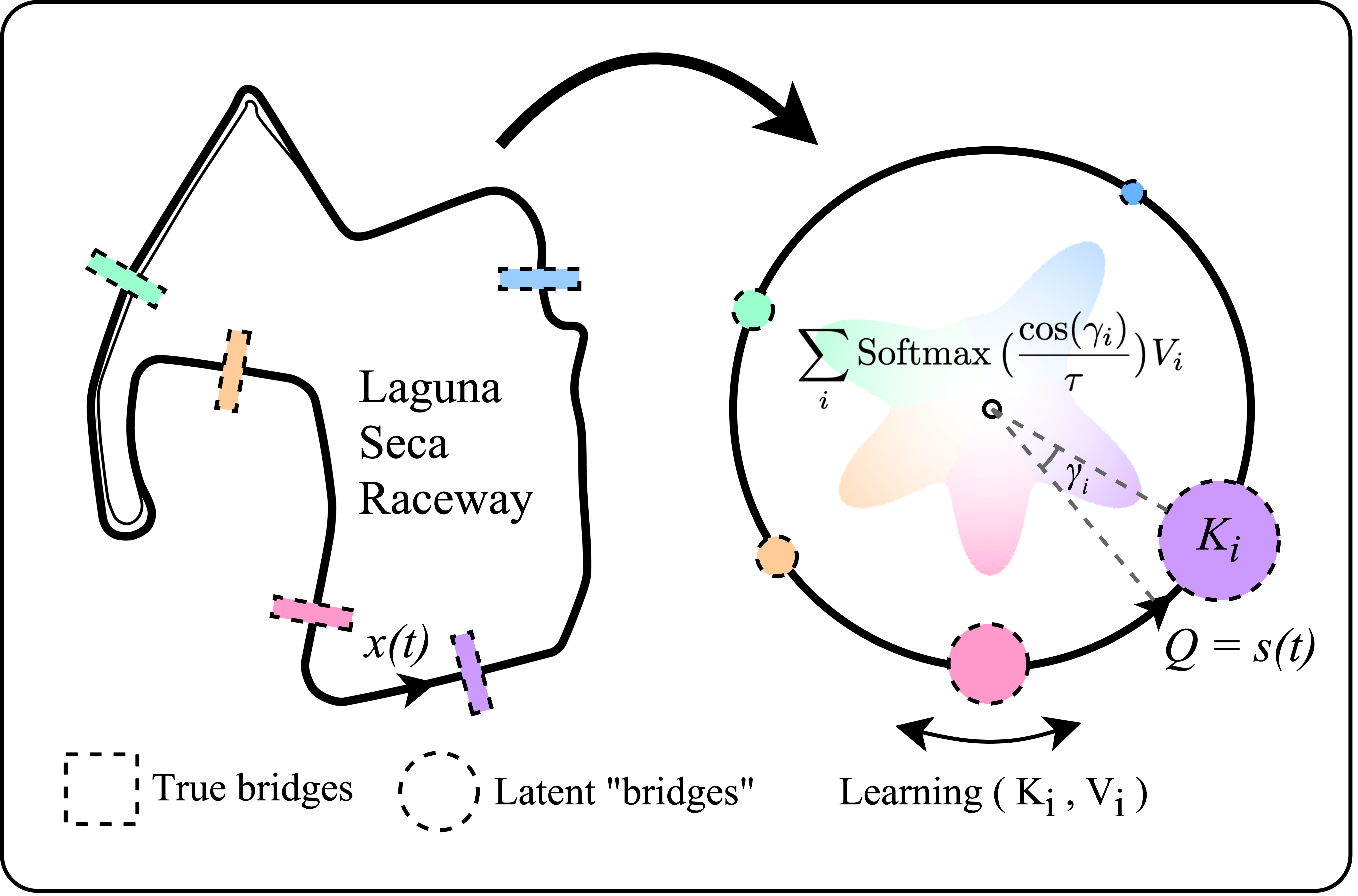}
    \caption{Illustration of the robot’s position $x(t)$, reparameterized into arc-length coordinates $s(t)$ and processed by a low-temperature attention mechanism to capture localized environmental features.}
    \label{fig:spatial_embedding}
\end{figure}

\subsection{Core Model}
\label{sec:spd_head}

A multi-layer perceptron encodes the spatial embedding $\tilde{s}(t)$, the longitudinal velocity $\dot{s}(t)$, and the GNSS quality factors $g(t)$ into a higher dimensional embedding $\phi \in \mathbb{R}^p$.

The embedding $\phi=\phi(\tilde{s}(t), \dot{s}(t), g(t))$ is finally used to produce an output positive-definite matrix $Q$. 
To ensure positive definiteness, we parameterize the process noise matrix using the LDL decomposition:
\begin{align}
    Q(\phi) = L(\phi) D(\phi) L(\phi)^{\!\top},
\end{align}
where $L(\phi) \in \mathbb{R}^{n\times n}$ is unit lower triangular and $D(\phi) \in \mathbb{R}^{n\times n}$ is diagonal with positive entries. This decomposition guarantees $Q(\phi) \succ 0$ by construction and provides a unique~\cite{higham_accuracy_2002} representation for positive definite matrices. Note that as exponential stability of $R(t)$ is maintained (Section~\ref{sec:stability}) we intentionally do \textit{not} employ spectral normalization to bound the Lipschitz constant of the DNN. This is because it is desirable for $Q(t)$ to be sensitive to $x(t)$ and $g(t)$.

% Figure~\ref{fig:ldl_decomposition} shows a visualization of the factorization.

% We parametrize the LDL factorization of the process matrix similar to \cite{hu_parametric_2015}. For $L(\phi)$ we employ a linear product weighting, and for $D(\phi)$ we use a softplus function to ensure strict positivity:
% \begin{align*}
%     D_{ii} &= \ln \!\bigl(1 + \exp(a_{i}^{\!\top} \phi)\bigr), \\[6pt]
%     L_{ij} &= 
%     \begin{cases}
%         b_{ij}^{\!\top} \phi, & i > j \\
%         1, & i = j \\
%         0, & i < j
%     \end{cases}\quad,
% \end{align*}
% where $a_i, b_{ij} \in \mathbb{R}^p$, with such parameterization, we are able to efficiently represent all positive definite matrices.

We parametrize the LDL factorization similar to~\cite{hu_parametric_2015}, with $D_{ii} = \ln(1 + \exp(a_i^\top \phi))$ ensuring strict positivity and $L_{ij} = b_{ij}^\top \phi$ for $i > j$, where $L$ is unit lower triangular and $a_i, b_{ij} \in \mathbb{R}^p$. This parameterization efficiently represents all positive definite matrices.

\subsection{Smoothness through LACE}
\label{sec:smoothness_lyap}
In the LACE framework, we model the dynamics of the measurement noise covariance using the Lyapunov differential equation:
\begin{align}
   \dot{R}(t)= A(t)R(t) + R(t)A(t)^{\!\top} + Q(t), \label{eq:cov_dynamics}
\end{align}
where $A(t)\in \mathbb{R}^{n \times n}$ is a transition matrix governing the covariance dynamics (e.g., $A(t)=\frac{\partial f(x,t)}{\partial{x}})$, $R(t)\in \mathbb{R}^{n \times n}$ represents the \textit{measurement} noise covariance matrix that we wish to estimate, rather than the state estimation covariance. In this formulation, $Q(t)\in \mathbb{R}^{n \times n}$, serves as the process noise matrix that drives the evolution of $R(t)$ and is obtained from the core model introduced in Section~\ref{sec:spd_head}. In practice, $A(t)$ can be designed to encode a desired noise dissipation process or learned as a data-driven initialization for deployment tuning (Section~\ref{sec:tunable}).

\begin{assumption}[Positive definiteness]
\label{assump:pd}
By definition, the covariance matrix must be positive semi-definite. In this work, we additionally constrain $R(t)\in\mathbb{S}^n_{++}$ for all $t \geq 0$ to allow for the well-posedness of~\eqref{eq:smoothness}. This property can be guaranteed by ensuring $Q(t) \succ 0$ and $R(0) \succ 0$.
\end{assumption}

We show that imposing a spectral constraint on $A(t)$ enforces smoothness as defined in~\eqref{eq:smoothness}.

\begin{theorem}\label{thm:spectral_smoothness}
Let $A(t)\in\mathbb{R}^{n\times n}$ be time-varying with eigenvalues $\{\lambda_i(t)\in \mathbb{C}\}_{i=1}^n$ and let $R(t)\in\mathbb{S}_{++}^n$ be differentiable. If the eigenvalues of $A(t)$ are constrained such that for all $t\geq 0$, there is a $\lambda_{\min} < 0$, such that:
\begin{equation}
\label{eq:lambda_constraint}
    \lambda_{\min} \leq \Re \big(\lambda_i(t)\big)< 0, \quad \forall i \in \{1,\dots,n\},
\end{equation}
then choosing $\lambda_{\min} \geq -\,r_{\max}/(2n)$ satisfies the smoothness constraint from~\eqref{eq:smoothness}.
\end{theorem}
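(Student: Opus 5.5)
The plan is to compute $\frac{d}{dt}\log\det R(t)$ explicitly along the covariance dynamics~\eqref{eq:cov_dynamics} and bound it from below using only the trace of $A(t)$. The statement does not say so, but I will take $R(t)$ to evolve according to~\eqref{eq:cov_dynamics} with $Q(t)\succeq 0$, as guaranteed by Assumption~\ref{assump:pd} and the LDL parameterization of Section~\ref{sec:spd_head}. Without this dynamics the bound cannot hold.

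First, since $R(t)\in\mathbb{S}^n_{++}$ is differentiable, Jacobi's formula gives
\begin{equation*}
\frac{d}{dt}\log\det R(t) = \operatorname{tr}\!\big(R(t)^{-1}\dot R(t)\big).
\end{equation*}
Substituting~\eqref{eq:cov_dynamics} and using cyclicity of the trace yields
\begin{equation*}
\operatorname{tr}\!\big(R^{-1}AR\big)+\operatorname{tr}\!\big(R^{-1}RA^{\top}\big)+\operatorname{tr}\!\big(R^{-1}Q\big) = 2\operatorname{tr}(A)+\operatorname{tr}\!\big(R^{-1}Q\big).
\end{equation*}

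Second, I drop the $Q$ term. Writing $\operatorname{tr}(R^{-1}Q)=\operatorname{tr}(R^{-1/2}QR^{-1/2})$, the matrix $R^{-1/2}QR^{-1/2}$ is positive semidefinite because $Q\succeq 0$ (indeed $Q\succ 0$). Its trace is therefore nonnegative, and
\begin{equation*}
\frac{d}{dt}\log\det R \ge 2\operatorname{tr}(A).
\end{equation*}

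Third, I bound the trace through the spectrum. For a real matrix $A(t)$, $\operatorname{tr}A(t)=\sum_i\lambda_i(t)$ is real. Non-real eigenvalues occur in conjugate pairs, so $\operatorname{tr}A(t)=\sum_i\Re\lambda_i(t)$. By~\eqref{eq:lambda_constraint} this sum is at least $n\lambda_{\min}$. With $\lambda_{\min}\ge -r_{\max}/(2n)$ we get
\begin{equation*}
\frac{d}{dt}\log\det R \ge 2n\lambda_{\min} \ge -r_{\max},
\end{equation*}
which is exactly~\eqref{eq:smoothness}.

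The upper bound $\Re\lambda_i<0$ is not needed for smoothness; it matters only for stability, which is addressed later. No step is genuinely hard. The main point of care is the second step, where positive semidefiniteness of $Q$ must be invoked properly: $R^{-1}Q$ is not symmetric, so the symmetrization $R^{-1/2}QR^{-1/2}$ is needed to conclude that its trace is nonnegative.
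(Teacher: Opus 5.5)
Your proof is correct and follows the paper's argument step for step. You apply Jacobi's formula, substitute the Lyapunov dynamics~\eqref{eq:cov_dynamics} to obtain $2\operatorname{tr}(A)+\operatorname{tr}(R^{-1}Q)$, drop the nonnegative $Q$ term, and bound $\operatorname{tr}(A)$ below by $n\lambda_{\min}$. You also make explicit three points the paper's proof leaves implicit: that $R(t)$ must follow these dynamics, the symmetrization $R^{-1/2}QR^{-1/2}$ that justifies $\operatorname{tr}(R^{-1}Q)\ge 0$, and the identity $\operatorname{tr}(A)=\sum_i\Re\lambda_i$ for real $A$.
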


\begin{proof}
Using the definition of the smoothness constraint in \eqref{eq:smoothness} and Assumption~\ref{assump:pd}, we write
\begin{equation}
\label{eq:logdet_dot}
\frac{d}{dt} \log \det R(t) = \operatorname{tr}\!\left(R(t)^{-1} \dot R(t)\right).
\end{equation}

Substituting $\dot R(t)$ in \eqref{eq:logdet_dot} with the dynamics in \eqref{eq:cov_dynamics} yields:
\begin{align*}
\frac{d}{dt} \log \det R(t) 
&= 2\operatorname{tr}\big(A(t)\big) + \operatorname{tr}\big(R(t)^{-1}Q(t)\big).
\end{align*}
Because of the positive definiteness of $R$ and $Q$, the term $\operatorname{tr}(R^{-1}Q)$ is positive, yielding the bound:
\begin{equation*}
\frac{d}{dt} \log \det R(t) \geq 2\operatorname{tr}(A(t)) =2\sum_{i=1}^n\lambda_i(t) \geq 2n\lambda_{\min}.
\end{equation*}

Substituting $\lambda_{\min}=-\,r_{\max}/(2n)$ gives the final claim.
\end{proof}

Theorem \ref{thm:spectral_smoothness} therefore allows us to bound the contraction rate of the covariance ellipsoid by imposing a spectral constraint on the eigenvalues of $A(t)$. 

\subsection{Data-efficient Implementation}

In practice, \eqref{eq:cov_dynamics} must be discretized at the GNSS sampling period $\Delta_t$, yielding the discrete-time Lyapunov recursion:
\begin{equation}
    R_{k+1} \;=\; A_{d,k}\, R_k\, A_{d,k}^{\!\top} \;+\; Q_{d,k},
    \label{eq:disc_lyapunov}
\end{equation}
where $(A_{d,k}, Q_{d,k})$ denote the discretized system and process noise matrices.

\begin{assumption}[Time-invariant diagonalizable dynamics]\label{assump:diagA}
We assume that $A$ is time-invariant and diagonalizable as $A = U \Lambda U^{-1}$, where $\Lambda = \mathrm{diag}(\lambda_1,\dots,\lambda_n)$ and $V \in \mathbb{C}^{n \times n}$ is an invertible matrix.
The eigenvalues $\{\lambda_i\}_{i=1}^n$ are learned parameters and satisfy the constraint specified in~\eqref{eq:lambda_constraint}.
\end{assumption}

Unrolling the discrete recurrence of \eqref{eq:disc_lyapunov} yields:
\begin{equation}
R_t \;=\; A_d^{\,t} R_0 \big(A_d^{\,t}\big)^{\!\top}
\;+\; \sum_{j=0}^{t-1} A_d^{\,j}\, Q_{d,\,t-1-j}\,\big(A_d^{\,j}\big)^{\!\top}.
\label{eq:cov_sum}
\end{equation}

Let $\widetilde{R}_t=U^{-1} R_t U^{-\top}$ and $\widetilde{Q}_t=U^{-1} Q_t U^{-\top}$ and ignoring the initial condition $R_0$ that contracts exponentially as shown in Section~\ref{sec:stability}, we can rewrite \eqref{eq:cov_sum} as:
\begin{equation}
    \widetilde{R}_t
    \;=\; \sum_{j=0}^{t-1} \Lambda^j \widetilde{Q}_{t-1-j} (\Lambda^j)^{\!\top},
    \label{eq:cov_sum_eig}
\end{equation}
since $\Lambda$ is diagonal, each entry of the covariance matrix evolves independently as:
\begin{equation}
    [\widetilde{R}_t]_{ij} \;=\; \sum_{k=0}^{t-1} (\lambda_i \lambda_j)^k \, [\widetilde{Q}_{t-1-k}]_{ij}.
    \label{eq:geom_series}
\end{equation}
This convolutional structure eliminates the sequential dependency in the recurrence in~\eqref{eq:disc_lyapunov}, and is well-suited for GPU hardware since it is parallelizable using the Fast Fourier Transform (FFT) with zero-padding, similar to the S4 state-space-model architecture~\cite{gu_efficiently_2022}, where a recurrent state update is reformulated as a parallelizable convolution.

\subsection{Loss Function} 
\label{sec:loss}
Given residuals $\epsilon(t)$ and inferred covariance $R(t)$, we can define the loss function as the Negative Log-Likelihood of the assumed heteroscedastic Gaussian model from \eqref{eq:dist} and incorporate an additional loss term to control the smoothness of the inferred measurement covariance, so that:
\begin{equation}
\mathcal{L}(t) = \mathcal{L}_\mathrm{NLL}(t) + \lambda\mathcal{L}_\mathrm{smooth}(t),
\end{equation}
where $\lambda > 0$ and
\begin{equation}
\label{eq:loss}
\mathcal{L}_\mathrm{NLL}(t) = \sum_{k=1}^{T}\log |R(t)| + \epsilon(t)^{\!\top} R(t)^{-1}\epsilon(t).
\end{equation}
In addition, we use an additional loss term that penalizes violations of \eqref{eq:smoothness} through a squared hinge loss:
\begin{equation}
\mathcal{L}_{\text{smooth}}(t) \;=\; 
\min\!\left(0,\, r_{\max} + \frac{d}{dt}\log \det R(t) \right)^2.
\end{equation}

Note that this penalty is inactive in the full LACE architecture since we have shown in Theorem~\ref{thm:spectral_smoothness} that $\frac{d}{dt}\log \det R(t) \geq -r_{\max}$ \textit{by design}. This penalty is nonetheless useful when comparing LACE against other baseline models for which smoothness is not guaranteed.

\subsection{Contraction Stability Analysis}
\label{sec:stability}
Consider the covariance propagation dynamics in (\ref{eq:cov_dynamics}), where $R \in \mathbb{S}^n_{++}$ is the positive definite covariance matrix, the time-varying $A(t) \in \mathbb{R}^{n \times n}$ is exponentially stable (contracting) with $\mu = -\frac{1}{2}\lambda_{\max}(A + A^{\!\top}) > 0$, and $Q(t) \in \mathbb{S}^n_{++}$ is the time-varying positive definite process noise matrix learned from environmental features.
We can also assume the learned process noise $Q_t$ is uniformly bounded: $\|Q_t\|_F \leq Q_{\max}$ for all $t \geq 0$.

\begin{theorem}
\label{thm:exp_conv}
The covariance dynamics~\eqref{eq:cov_dynamics} is incrementally exponentially stable. Any two solutions $R_1(t)$ and $R_2(t)$ with different initial conditions satisfy:
\begin{equation}
\|R_1(t) - R_2(t)\|_F \leq \|R_1(0) - R_2(0)\|_F e^{-2\mu t},
\end{equation}
where $\mu = -\frac{1}{2}\lambda_{\max}(A + A^{\!\top}) > 0$.
\end{theorem}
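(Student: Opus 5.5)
The plan is to exploit the fact that \eqref{eq:cov_dynamics} is affine in $R$ with the same forcing term $Q(t)$ for every solution. The difference of two trajectories therefore obeys a homogeneous linear matrix ODE, and a Frobenius-norm energy argument gives the rate; this is the standard contraction argument of~\cite{lohmiller1998contraction,tsukamoto_contraction_2021}.

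First, define $E(t)=R_1(t)-R_2(t)$. Subtracting the two copies of \eqref{eq:cov_dynamics} cancels $Q(t)$ exactly and leaves
\begin{equation*}
\dot E(t)=A(t)E(t)+E(t)A(t)^{\!\top}.
\end{equation*}
Equivalently, the generalized Jacobian of the vector field $R\mapsto AR+RA^{\!\top}+Q$ with respect to $R$ is the linear Lyapunov operator $\mathcal{A}(E)=AE+EA^{\!\top}$, which does not depend on $R$ or $Q$. This is why neither Assumption~\ref{assump:pd} nor the bound $Q_{\max}$ is needed for the incremental estimate. Those only ensure the solutions are well defined and remain in $\mathbb{S}^n_{++}$.

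Second, consider $V(t)=\|E(t)\|_F^2=\operatorname{tr}(E^{\!\top}E)$. Since $E$ is symmetric, differentiating and using cyclicity of the trace gives
\begin{equation*}
\dot V=2\operatorname{tr}(E\dot E)=2\operatorname{tr}\big(E(AE+EA^{\!\top})\big)=4\operatorname{tr}(E^2A)=2\operatorname{tr}\big(E(A+A^{\!\top})E\big).
\end{equation*}
Now write $E(A+A^{\!\top})E=\sum_k e_k^{\!\top}(A+A^{\!\top})e_k$, summed over the columns $e_k$ of $E$. Bounding each quadratic form by $\lambda_{\max}(A+A^{\!\top})\|e_k\|^2=-2\mu\|e_k\|^2$ yields $\dot V\le -4\mu V$.

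Third, apply the comparison lemma (Grönwall) to get $V(t)\le V(0)e^{-4\mu t}$. Taking square roots gives the claimed bound with rate $2\mu$. For time-varying $A(t)$, this requires $\mu$ to be a uniform lower bound, that is, $-\tfrac12\lambda_{\max}(A(t)+A(t)^{\!\top})\ge\mu>0$ for all $t$. I would state this interpretation explicitly.

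The only delicate step is the trace inequality $\operatorname{tr}(E(A+A^{\!\top})E)\le\lambda_{\max}(A+A^{\!\top})\|E\|_F^2$. It must be justified carefully: the inequality must be applied column by column to the quadratic forms, not to the possibly indefinite product $E^2A$. The remaining steps are routine.
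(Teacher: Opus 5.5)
Your proof is correct and follows the paper's argument essentially step for step: $Q(t)$ cancels in the error dynamics $\dot E = AE + EA^{\top}$, and the Lyapunov function $V=\operatorname{tr}(E^2)$ satisfies $\dot V = 2\operatorname{tr}\big((A+A^{\top})E^2\big)\le -4\mu V$, so the comparison lemma gives the rate $2\mu$. Two points in your version are slightly more careful than the paper, which simply cites negativity of the eigenvalues of $A+A^{\top}$: the column-wise justification of the trace bound (write it as $\operatorname{tr}\big(E(A+A^{\top})E\big)=\sum_k e_k^{\top}(A+A^{\top})e_k$, since you dropped the trace), and the explicit uniform-in-$t$ reading of $\mu$.
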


\begin{proof}
Let $R_1(t)$ and $R_2(t)$ be two solutions of~\eqref{eq:cov_dynamics} with different initial conditions. Define the error $\Delta R = R_1 - R_2$. The error dynamics are:
\begin{equation}
\frac{d\Delta R}{dt} = A\Delta R + \Delta RA^{\!\top}.
\end{equation}

Note that $Q_t$ cancels as it is identical for both trajectories at time $t$. Also, $\Delta R^{\!\top} = \Delta R$ (as the difference of two symmetric matrices). Consider the squared Frobenius norm $V = \|\Delta R\|_F^2 = \mathrm{tr}(\Delta R^{\!\top} \Delta R) = \mathrm{tr}(\Delta R^2)$. Taking the time derivative:
\begin{align}
\dot{V} &= 2\mathrm{tr}((A + A^{\!\top})\Delta R^2)\leq -4\mu V,
\end{align}
since $A$ is contracting with $A + A^{\!\top}$ has all negative eigenvalues.

Using the comparison lemma yields $V(t) \leq V(0)e^{-4\mu t}$. With a properly scaled bounded numerical error $\sup_t d(t)=\bar{d}$, we find an exponentially converging error bound:
\begin{equation}
\|\Delta R(t)\|_F \leq \|\Delta R(0)\|_F e^{-2\mu t}+\frac{\bar{d}(1-e^{-2\mu t})}{2\mu},
\end{equation}
which indicates the size of the error decreases with larger $\mu$ properly selected for fast convergence of $R(t)$~\cite{tsukamoto_contraction_2021}.
\end{proof}

\section{Results}
\label{sec:results}
We implemented LACE on an AV-24 autonomous racecar and conducted multiple experiments at the Laguna Seca Raceway~\cite{iac_racecar_nodate}. Our evaluation strategy leverages a high-fidelity, offline pipeline to generate ground-truth state estimates from data collected during on-track experiments. We use a LiDAR-based simultaneous localization and mapping (SLAM) offline framework to produce accurate trajectories. This pseudo-ground-truth enables a rigorous assessment of our lightweight, online GNSS covariance learning model. The following sections present our baselines, evaluation metrics, and the performance of our method integrated into the vehicle's localization stack by inferring a $R(t) \in \mathbb{R}^{3\times 3}$ GNSS covariance matrix.
\begin{figure}[!t]
    \centering
    \includegraphics[width=\columnwidth]{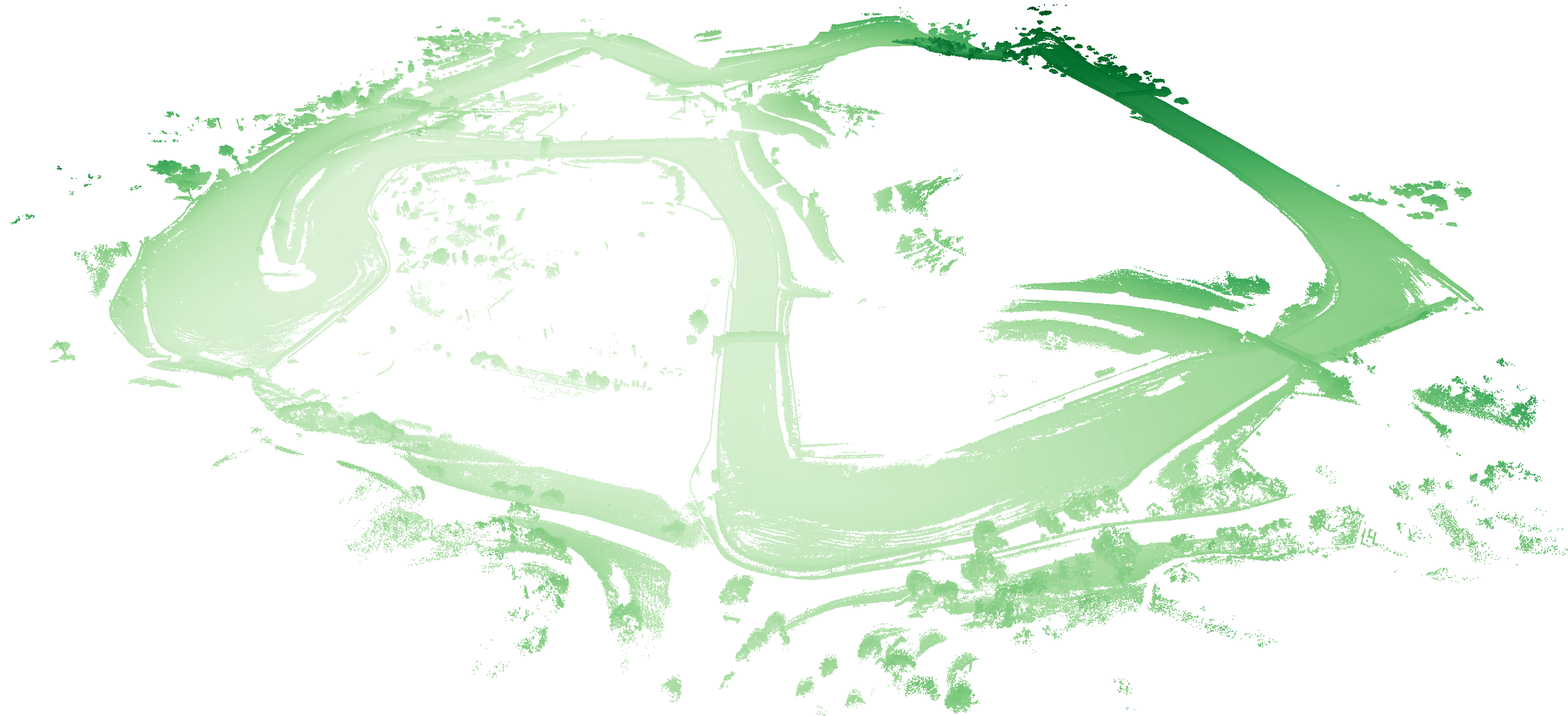}
    \caption{Dense LiDAR map of the Laguna Seca Raceway from running offline FGO based SLAM framework with color coded by altitude.}
    \label{fig:glim_map}
\end{figure}
\subsection{Datasets}

We collected sensor data from multiple practice sessions at Laguna Seca Raceway using an industrial-grade sensor suite. This setup included Luminar Iris LiDARs (20 Hz), NovAtel OEM7 GNSS receivers with RTK corrections (20 Hz), and a VectorNav VN310 IMU (200 Hz). Then, to establish a high-fidelity ground truth we employed an offline state estimation pipeline using GLIM~\cite{koide_glim_2024}, a factor-graph optimization (FGO) based SLAM framework that allows GPU-based dense LiDAR scan registration and optimal fusion with IMU and GNSS sensor measurements. To ensure robustness against signal degradation, GNSS measurements were only incorporated as factors in areas known to be free of interference with full RTK availability to reduce global drift. In open-sky segments, the SLAM trajectory agreed with RTK-fixed GNSS solutions to within 5 cm, supporting its use as a reference trajectory. The resulting dense map from this process is shown in Fig.~\ref{fig:glim_map}. By running the framework on all datasets, we obtained the pseudo-ground truth state $x(t)$ for a given run.

\begin{figure}[!t]
    \includegraphics[width=\columnwidth]{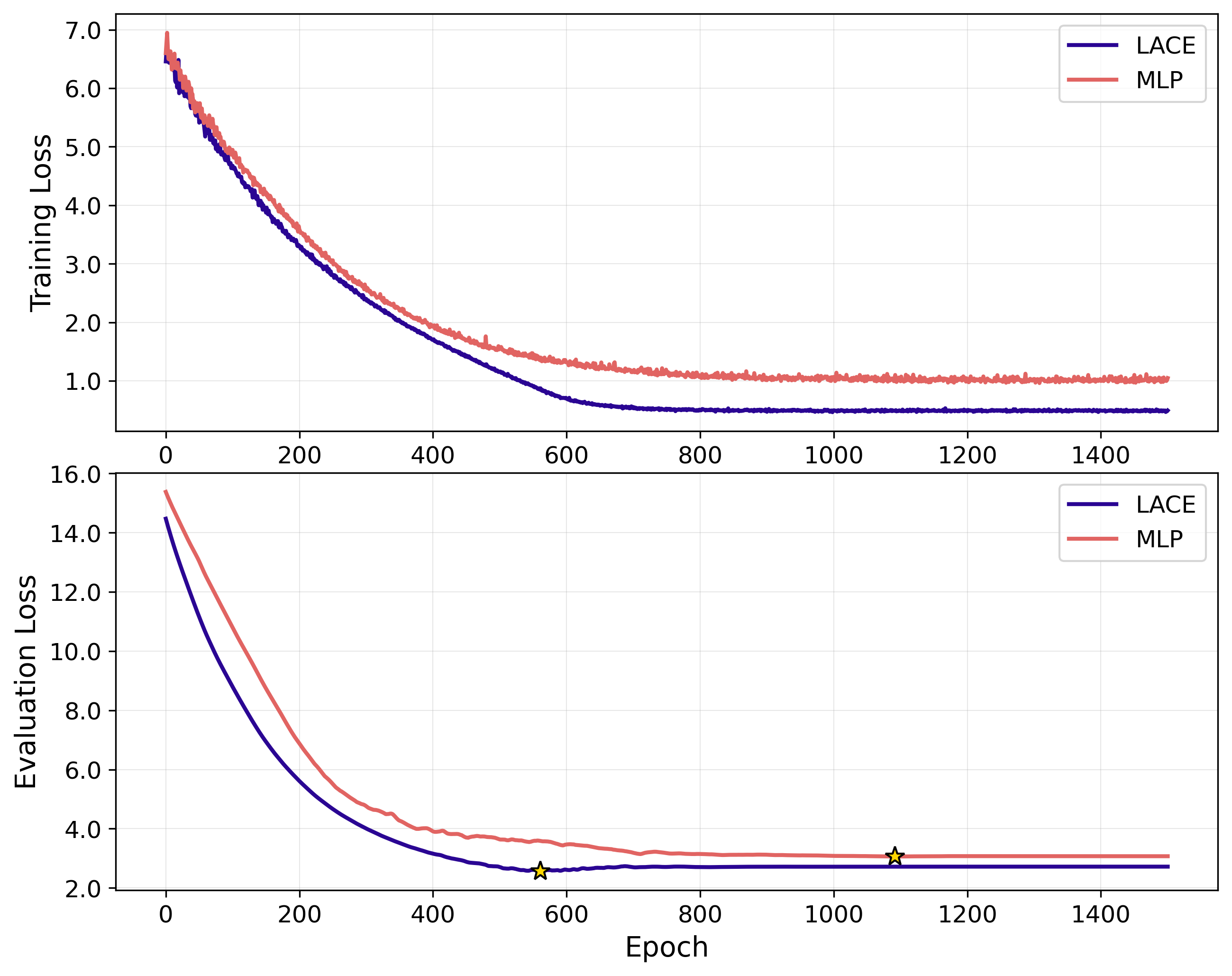}
    \caption{Train and evaluation loss of our method (LACE) and MLP, yellow stars indicate the lowest part of the evaluation loss. LACE shows a much faster training convergence rate.}
    \label{fig:loss}
\end{figure}

\subsection{Training}
To evaluate the effectiveness of our proposed dynamical framework, we compare LACE against a one-shot covariance prediction baseline using a multi-layer perceptron (MLP) that directly predicts the covariance matrix $R(t)$ at each timestep without explicitly modeling its temporal evolution. Its architecture is illustrated in Fig.~\ref{fig:cov_arch} as the \textit{Core Model} with $Q$ being the estimated covariance matrix. To ensure a fair comparison that isolates the impact of our dynamic modeling, the MLP baseline is constructed with the same architecture and number of learnable parameters as the covariance head in our LACE model. Both models were trained for a fixed number of epochs to minimize the NLL objective. As illustrated in Fig.~\ref{fig:loss}, our method not only achieves a better evaluation loss, but also converges significantly faster, demonstrating the benefits of explicitly modeling the covariance dynamics.

\begin{figure}[!t]
    \includegraphics[width=\columnwidth]{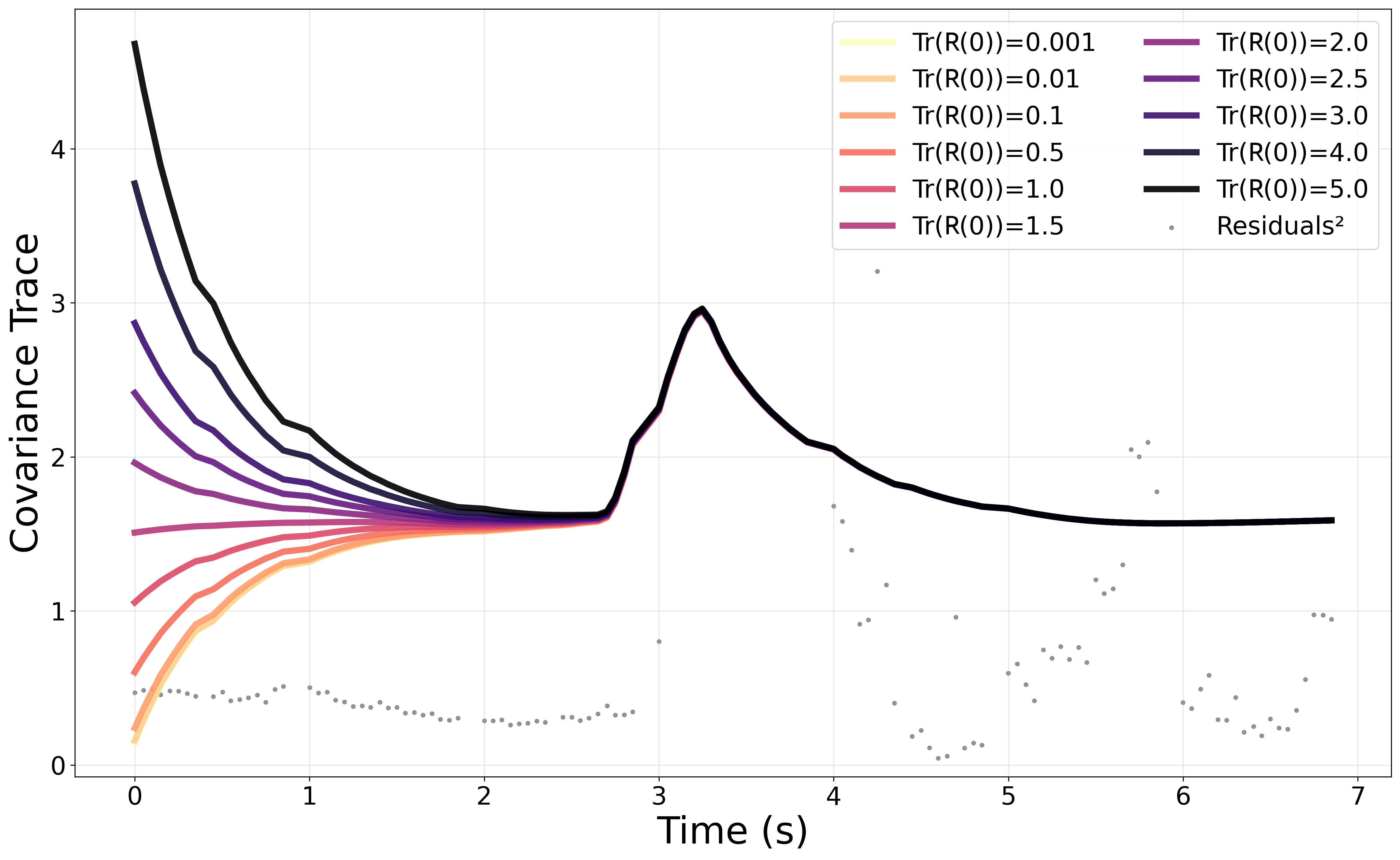}
    \caption{Guaranteed exponential convergence of covariance evolution for LACE, demonstrated by different initial covariances $R(0)$. }
    \label{fig:different_P0}
\end{figure}

\subsection{Exponential Convergence}
Our theoretical analysis in Section~\ref{sec:stability} guarantees that the covariance dynamics are incrementally exponentially stable, making our LACE framework robust to initial conditions. This property ensures that for any arbitrary positive definite initial covariance $R(0)$, the system will converge to the same covariance trajectory dictated by the learned dynamics. We validate this theoretical result experimentally, as shown in Fig.~\ref{fig:different_P0}. By initializing the system with a diverse set of covariance matrices, we demonstrate that all trajectories indeed converge exponentially to the same unique evolution, confirming the practical stability and predictability of our approach.

\begin{figure*}[!t]
    \centering
    \includegraphics[width=\textwidth]{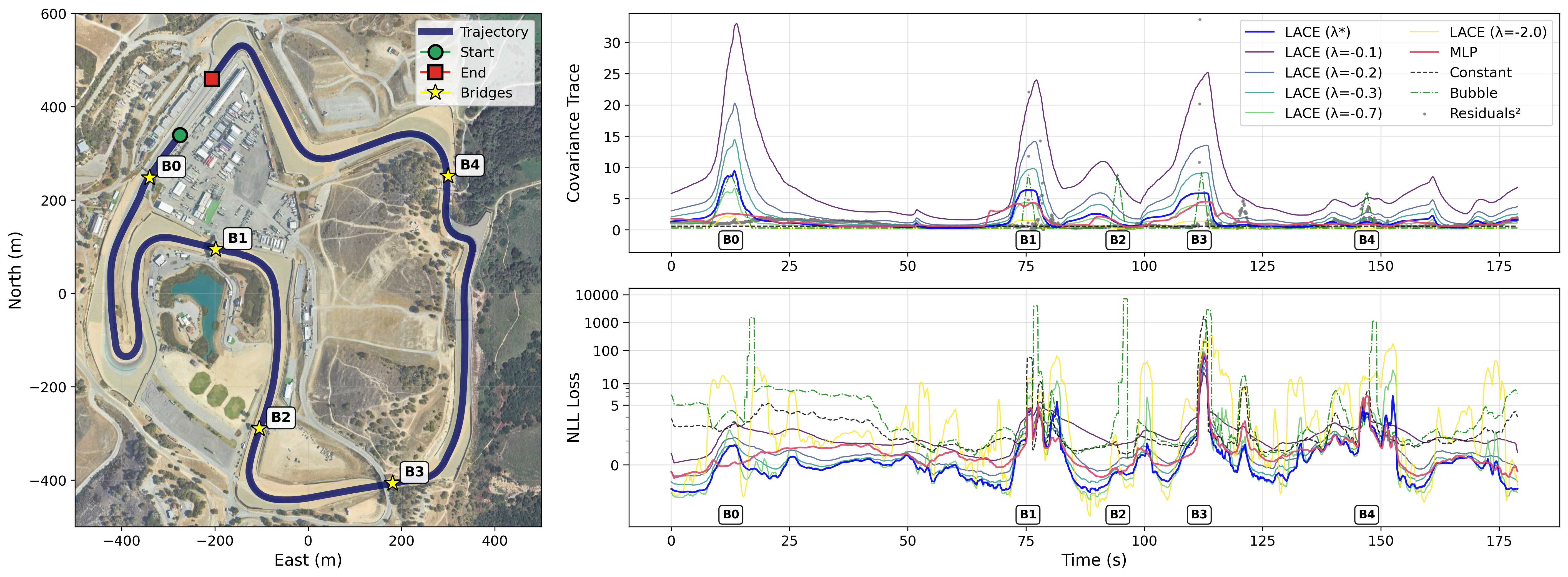}
    \caption{Covariance model comparison. \textbf{Left}: satellite view of a practice lap at Laguna Seca Raceway, with bridges that potentially degrade GNSS measurements labeled. \textbf{Top right}: traces of the covariance matrices from different models. Our method successfully encloses the residuals while maintaining smoothness. \textbf{Bottom right}: negative log likelihood of evaluated models, LACE overall has the lowest loss suggesting a better estimation of the GNSS uncertainty.}
    \label{fig:prediction}
\end{figure*}
\begin{figure}[!t]
    \includegraphics[width=\columnwidth]{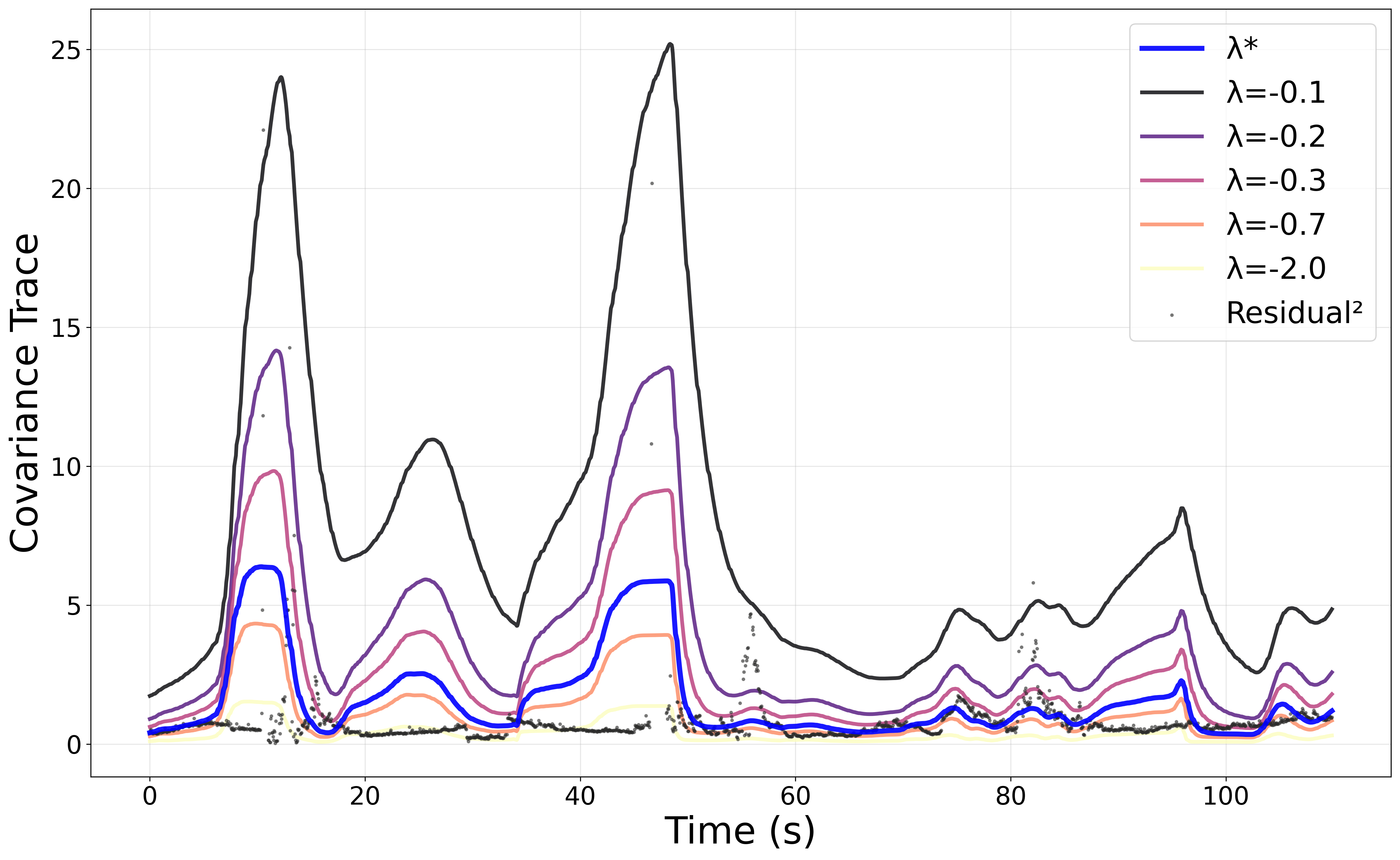}
    \caption{Effects of different choice of eigenvalues on the predicted covariance. $\lambda^*$ denotes the learned value. Lower $\lambda$ biases the learned progress noise matrix $Q(t)$ more resulting overall more conservative $R(t)$.}
    \label{fig:different_lambda}
\end{figure}
\subsection{Track Prediction}
Over the course of a lap at the Laguna Seca Raceway, our autonomous racecar encounters multiple pedestrian and vehicle bridges that severely degrade GNSS quality. To analyze the practical performance of LACE on the race track, we compared its predictions against two common empirical models in addition to the MLP baseline. The first empirical approach uses a \textit{Constant} covariance, parameterized by a scalar such that $R_{\text{constant}} = cI$ with $c \in \mathbb{R}$ and $I \in \mathbb{R}^{3 \times 3}$ is the identity matrix. The second, a heuristic \textit{Bubble} model, attempts to account for known GNSS degradation zones by linearly scaling $c$ by the distance to the preset bridge locations with some additional fixed padding. 

As illustrated in Fig.~\ref{fig:prediction}, our analysis reveals the limitations of the baseline methods relative to our solution. The \textit{Constant} model, by design, cannot capture any environmental variations in GNSS quality. While the \textit{Bubble} model provides a coarse approximation by increasing uncertainty near bridges (B0, B1, B2, and B3), its hand-tuned nature prevents it from capturing the true, time-varying signal quality. As a result, it fails to recognize the minimal degradation at another bridge (B2) during this particular lap, thereby sacrificing estimation performance. The purely adaptive MLP baseline, while responsive, produces covariance estimates that can be highly irregular subject to rapid changes. In contrast, our LACE model not only provides a better representation of the underlying residual distribution, especially in areas with erratic GDOP values, but does so while maintaining a significantly smoother covariance trajectory as a direct benefit of our explicit dynamic modeling.

\subsection{Tunable Dynamic Matrix}
\label{sec:tunable}
A key advantage of LACE is the ability to not only learn an optimal system matrix $A$ but also to manually tune its dynamics for specific operational needs. While the framework learns a set of eigenvalues, $\lambda_i^*$, that optimally minimize the loss function (Eq.~\ref{eq:loss}), these values can be manually adjusted to alter the model's sensitivity. As demonstrated in Fig.~\ref{fig:different_lambda}, this provides a powerful mechanism for balancing statistical optimality with practical robustness. Although the learned eigenvalues $\lambda^*$ yield the lowest average loss (Table~\ref{table:test_avg_loss}), they can occasionally underestimate uncertainty during moments of rapid signal degradation. By selecting a less negative eigenvalue (e.g., $\lambda = -0.1$), we can make the covariance evolution respond more aggressively to the learned process noise $Q(t)$. This results in a more conservative estimate that more consistently bounds the measurement residuals. This inherent flexibility is a significant practical benefit, allowing the system to be tuned for varying levels of environmental adversity and providing a balance between adaptivity and smoothness during deployment.

\subsection{Estimator in the Loop}
The Multisensor-aided Inertial Navigation System (MINS~\cite{lee_mins_2025}) is a MSCKF based state estimation framework that is lightweight, robust, and supports diverse sensor configurations. We adopted a modified version of MINS as part of our localization stack. To further improve real-time performance, we converted our DNN model to TensorRT (TRT), enabling efficient inference on the onboard GPU of our racecar. In this section, we present the results of our method integrated within this state-of-the-art sensor fusion framework.

We evaluated our method across different areas of the track. Figure~\ref{fig:estimator} illustrates estimator performance at Bridge 3 (B3 in Fig.~\ref{fig:prediction}), which introduces the strongest GNSS disturbances due to its size and material. LACE adapts to the highly degraded measurements and provides reliable uncertainty estimates to the EKF. In contrast, the other methods allow overconfident updates, leading to sharp deviations in state estimation which will eventually trigger unrecoverable behavior in the high-fidelity race controller. These results highlight the advantage of LACE to the downstream systems, particularly under extreme conditions.

\begin{table}[t]
\renewcommand{\arraystretch}{1.0}
\caption{Average loss across multiple laps. While $\lambda^*$ achieves the lowest average loss, a smaller $\lambda$ (such as $\lambda=-0.1$) can help robustify the model performance by biasing towards the degradation signals picked up by $Q(t)$.} \label{table:test_avg_loss}

% \caption{Average loss across multiple laps} \label{table:test_avg_loss}
\centering
\begin{tabular}{lcc}
\toprule
Model & Avg Loss & Std \\
\midrule
LACE ($\lambda^*$)   & \textbf{2.9694} & 5.3995 \\
LACE ($\lambda=-0.1$) & 5.3642 & \textbf{0.5913} \\
LACE ($\lambda=-0.2$) & 4.4280 & 1.1541 \\
LACE ($\lambda=-0.3$) & 3.9281 & 1.7011 \\
LACE ($\lambda=-0.7$) & 3.1205 & 3.8769 \\
LACE ($\lambda=-2.0$) & 3.2145 & 11.8982 \\
MLP                   & 3.4167 & 3.2236 \\
Constant              & 5.8491 & 33.0986 \\
Bubble                & 4.2820 & 14.8187 \\
\bottomrule
\end{tabular}
\end{table}
\begin{figure}[!t]
    \includegraphics[width=\columnwidth]{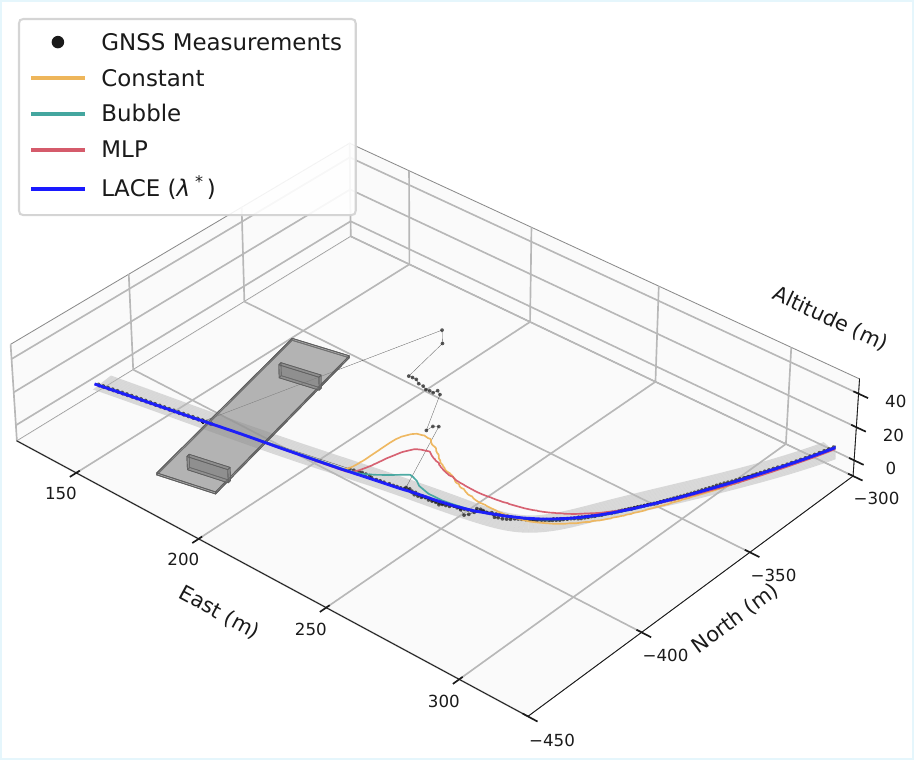}
    \caption{Trajectories estimated from MINS with different covariance models. As the racecar passes under the bridge (illustrated by the gray slabs), the GNSS quality becomes strongly disturbed, as indicated by the spread of measurements. Our method, with an optimal $\lambda$, successfully captures the rapid changes in noise distribution, whereas other methods produce overconfident covariance estimates that mislead the estimator.}
    \label{fig:estimator}
\end{figure}
\section{Conclusion}
We presented LACE, a method for estimating and adapting GNSS measurement covariance for high-speed autonomous systems. In particular, our method achieves superior environmental adaptivity while guaranteeing temporal smoothness for the downstream feedback controller. By modeling covariance evolution as a provably stable dynamical system with spectral constraints, our LACE framework provides a novel method for learning environment-aware uncertainty. We also derived formal guarantees of exponential stability and smoothness bound for the covariance dynamics, ensuring that the resulting estimates are smooth by design. Through validation with data from an autonomous racecar in a real-world racing environment, we demonstrated the effectiveness of our framework in producing smoother and more accurate covariance predictions in challenging, GNSS-degraded conditions.

For future work, the extension of our dynamical modeling approach to other sensing modalities, such as LiDAR and vision, could provide a unified framework for multi-sensor covariance estimation. Furthermore, moving beyond Gaussian noise assumptions to handle arbitrary distributions (such as using flow models) would further broaden its capability to diverse sensing configurations with higher granularity. Finally, exploring the explicit co-design of the covariance model and the downstream vehicle controller could lead to new paradigms in integrated state estimation and control, further enhancing the stability and performance of safety-critical autonomous systems.

\section{Acknowledgment}
The authors thank R. Blanchard, M. Anderson, X. Zuo, P. Spieler, J. Lathrop, J. Cho, N. Ranganathan, and the other members of the Caltech Racer team.
% \section*{Acknowledgment}
% Omitted for anonymous review.

\bibliographystyle{IEEEtran}
\bibliography{IEEEabrv, references_abrv}
\end{document}